\RequirePackage{iftex}
\ifPDFTeX
  \pdfoutput=1
\fi
\documentclass[conference]{IEEEtran}
\usepackage{graphicx}
\usepackage{booktabs}
\usepackage{amsmath,amssymb,amsthm}
\usepackage{multirow}
\usepackage[hidelinks]{hyperref}
\usepackage{xcolor}
\usepackage{cite}

\newtheorem{proposition}{Proposition}
\newcommand{\method}{FreezePlan}
\newcommand{\xh}{\hat{x}_0}
\newcommand{\tsb}[1]{\textbf{#1}}

\begin{document}

\title{Freeze What Is Settled, Cache What Is Smooth:\\Spatial Freezing Keeps Step Caching in Its Low-Error Regime}

\author{\IEEEauthorblockN{Baiyan Li}
\IEEEauthorblockA{Lancaster University\\
Beijing Jiaotong University\\
libaiyan@bjtu.edu.cn}
\and
\IEEEauthorblockN{Ganxuan Yang}
\IEEEauthorblockA{Lancaster University\\
Beijing Jiaotong University\\
23722069@bjtu.edu.cn}}

\maketitle

\begin{abstract}
Instruction-based image editors built on diffusion transformers run tens of denoising steps.
Training-free acceleration either reuses features across steps (step caching) or stops computing the regions an edit leaves untouched (region freezing), and the two have been combined before.
We ask how a high speed-up target should be split between them.
On Qwen-Image-2.1, freezing does more than remove tokens: it lowers the cost of every step, so the step cache can reach the same overall speed-up with a conservative threshold that keeps its error small.
A per-step cost model fitted on development runs predicts held-out latency with a median error of 4.0\% and reproduces the speed ceilings of the region methods we re-implemented (2.73$\times$ predicted vs.\ 2.70$\times$ measured for RegionE).
Driving frozen tokens to the model's one-step $x_0$ estimate beats returning them to the reference image on every development sample, and in 90\% of measured cases the composed error stays below the sum of the two individual errors.
The resulting method, \method{}, keeps five full steps and gives the rest of the budget to TaylorSeer.
On a pre-registered test split it runs 5.05$\times$ faster end to end at 29.8\,dB PSNR to the original output, with no significant drop in a VLM judge score; TaylorSeer at 5.02$\times$ reaches 23.5\,dB.
The curve extends to 5.99$\times$ at 28.0\,dB.
Swapping in the region selectors of prior methods under our schedule lowers median PSNR by only 0.5--0.6\,dB, which places the gain in the schedule and budget split rather than in the selector.
\end{abstract}

\begin{IEEEkeywords}
image editing, diffusion transformers, inference acceleration, feature caching, token freezing
\end{IEEEkeywords}

\section{Introduction}
\label{sec:intro}

Diffusion-transformer editors such as FLUX.1 Kontext~\cite{kontext}, Qwen-Image~\cite{qwenimage} and Qwen-Image-2.1~\cite{qwenimage21} follow free-form instructions with high fidelity, but every edit runs tens of denoising steps.
On one RTX~4090, a 1024-pixel edit with Qwen-Image-2.1 takes 24.5\,s over 40 steps, and a full transformer step costs about 0.58\,s.
Training-free acceleration follows two routes.
\emph{Step caching}~\cite{deepcache,fora,deltadit,teacache,toca,duca,taylorseer,foca,freqca,hyca,cachedit} reuses or extrapolates features across steps.
\emph{Region freezing}~\cite{spotedit,regione} exploits the fact that an edit usually changes a small part of the image and stops computing the rest.

Editing has both spatial and temporal redundancy, and prior work already exploits the two together.
EEdit~\cite{eedit} combines spatial locality with step skipping for inversion-based editing, RegionE~\cite{regione} couples region partition, region K/V caching and velocity caching, and SpotEdit~\cite{spotedit} stacks its region skipping with TeaCache and TaylorSeer.
Budget-constrained scheduling has precedents as well: BudCache~\cite{budcache} searches step-level caching policies under a fixed number of function evaluations, and ProCache~\cite{procache} searches constrained caching schedules with selective token computation.
None of these works explains how the errors and costs of freezing and caching \emph{interact}, and that interaction decides how a high speed-up target should be split between them.

On Qwen-Image-2.1 both routes fall short beyond about 4$\times$ (Table~\ref{tab:main}).
TaylorSeer drops from 32.8\,dB at 4.1$\times$ to 23.5\,dB at 5.0$\times$ (SSIM 0.97$\to$0.71), and the region methods we re-implemented with their original schedules stop at 2.70$\times$ (RegionE) and 3.90$\times$ (SpotEdit) even with TaylorSeer added.
Other works report higher speed-ups on other models, e.g., 6.24$\times$ for HyCa~\cite{hyca} on Qwen-Image and up to 10$\times$ for SpecEdit~\cite{specedit}, under different quality protocols or acceleration mechanisms.

We treat high-speed-up editing as a computation-budget split under a fidelity constraint and make three observations.
(i)~Under Euler sampling, a frozen token driven toward a fixed target reaches it exactly (Prop.~\ref{prop:anchor}); its error is set by the target, and the model's one-step $x_0$ estimate is a better target than the reference image.
(ii)~A linear per-step cost model explains real latency; the number of full steps sets the speed ceiling, and freezing lowers the cost of every other step.
(iii)~For the same overall speed-up, the step cache can therefore use a conservative threshold, and in 90\% of measured cases the composed error stays below the sum of the individual errors.
Our contributions are:
\begin{itemize}
  \item A per-step cost model, validated out of sample, that explains the speed ceilings of the region methods we re-implemented and motivates spending few full steps and a conservative cache (Sec.~\ref{sec:cost}).
  \item Measurements of frozen-token targets and composed errors (Secs.~\ref{sec:anchor},~\ref{sec:compose}).
  \item \method{} (Fig.~\ref{fig:method}), which computes only active tokens on the native prefix KV cache of Qwen-Image-2.1 and outperforms every tested method that reaches 5--6$\times$ on a pre-registered test split. A schedule-matched comparison attributes the gain to the schedule rather than to the region selector (Sec.~\ref{sec:exp}).
\end{itemize}

\section{Related Work}
\label{sec:related}

\textbf{Step caching.}
DeepCache~\cite{deepcache}, FORA~\cite{fora} and $\Delta$-DiT~\cite{deltadit} reuse features of neighbouring steps; TeaCache~\cite{teacache} and DBCache~\cite{cachedit} decide adaptively when to reuse; ToCa~\cite{toca} and DuCa~\cite{duca} select tokens; TaylorSeer~\cite{taylorseer}, FoCa~\cite{foca} and FreqCa~\cite{freqca} forecast instead of copying; HyCa~\cite{hyca} assigns different solvers to different feature dimensions.
These methods improve the cache operator.
We leave the operator unchanged and reduce the share of the speed-up it must deliver, so stronger operators are complementary.

\textbf{Budget-constrained schedules.}
BudCache~\cite{budcache} and ProCache~\cite{procache} allocate a budget along the step axis.
We study how a second axis, spatial freezing, moves the operating point of the step axis.

\textbf{Region-aware editing.}
SpotEdit~\cite{spotedit} selects regions with a perceptual score, returns frozen regions to the reference and re-decides at fixed steps.
RegionE~\cite{regione} partitions with a one-step $x_0$ estimate, drives frozen regions to that estimate, keeps their K/V and caches velocities of the edited region.
EEdit~\cite{eedit} targets inversion-based editing; SpecEdit~\cite{specedit} uses dynamic resolution.
Our frozen-token target and K/V retention follow RegionE; we differ in the analysis and in the schedule it leads to.
RAS~\cite{ras}, HSA~\cite{hsa} and HetCache~\cite{hetcache} likewise keep K/V of inactive tokens in generation.

\textbf{Fewer steps and distillation.}
Fewer steps or step distillation~\cite{lcm,add,dmd} change the sampling trajectory and hence the output; we compare against fewer steps and discuss the difference in Sec.~\ref{sec:fewsteps}.

\section{Analysis and Method}
\label{sec:method}

\begin{figure*}[t]
  \centering
  \includegraphics[width=0.86\textwidth]{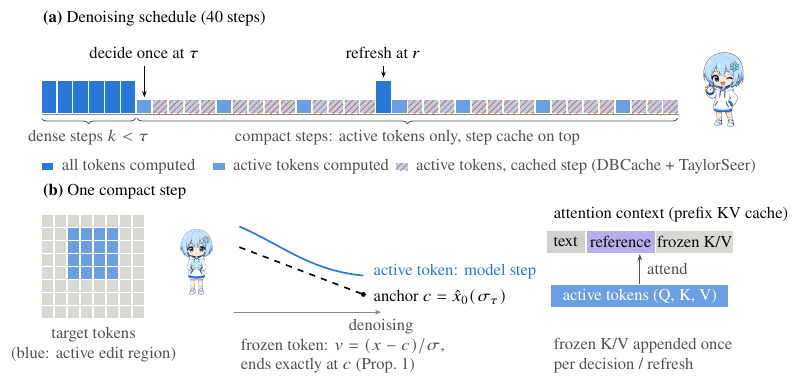}
  \caption{\method{} (Sec.~\ref{sec:freezeplan}). (a) Schedule: $\tau$ full steps, one region decision, compact steps over active tokens with step caching, one optional refresh. (b) In a compact step, frozen tokens move straight to their anchor $c=\xh(\sigma_\tau)$ and their K/V is appended once to the prefix cache that active tokens attend to. The cached-step pattern in (a) is illustrative.}
  \label{fig:method}
\end{figure*}

\subsection{Preliminaries}
Rectified-flow models~\cite{flowmatching,rectflow} write $x_\sigma=(1-\sigma)x_0+\sigma\epsilon$ and predict the velocity $v_\theta(x_\sigma,\sigma)\approx\epsilon-x_0$.
Euler sampling over $\sigma_0=1>\dots>\sigma_K=0$ ($K=40$) updates $x_{k+1}=x_k+(\sigma_{k+1}-\sigma_k)\,v_\theta(x_k,\sigma_k)$, and the one-step estimate is $\xh(\sigma_k)=x_k-\sigma_k v_\theta$.
We denote the 40-step output by $x_0^\ast$.
Qwen-Image-2.1 is block-causal: the instruction and the reference image are encoded once into a prefix KV cache, and each step processes only the target tokens.
A region method splits target tokens into an active set $A$ (fraction $\rho$) and a frozen set $F$.

\subsection{Frozen tokens reach their anchor}
\label{sec:anchor}
\begin{proposition}\label{prop:anchor}
Let token $i$ be frozen from step $k_0$ with a fixed anchor $c_i$ and velocity $v=(x_k-c_i)/\sigma_k$. Then $x_k-c_i=(\sigma_k/\sigma_{k_0})(x_{k_0}-c_i)$ for $k\ge k_0$, so $x_K=c_i$.
\end{proposition}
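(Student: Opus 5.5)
The proof is a direct induction on $k$ using the Euler update from the Preliminaries with the prescribed frozen-token velocity. Write $e_k=x_k-c_i$ for the deviation of token $i$ from its anchor. Since $c_i$ is fixed, subtracting it from both sides of $x_{k+1}=x_k+(\sigma_{k+1}-\sigma_k)\,v$ and inserting $v=(x_k-c_i)/\sigma_k=e_k/\sigma_k$ gives the scalar linear recursion
\[
e_{k+1}=e_k+\frac{\sigma_{k+1}-\sigma_k}{\sigma_k}\,e_k=\frac{\sigma_{k+1}}{\sigma_k}\,e_k .
\]
This is the one computation the whole argument rests on. The frozen velocity is exactly the rectified-flow velocity that points from the current state to a clean target $c_i$, and one Euler step with it lands on the straight line between $x_k$ and $c_i$ at noise level $\sigma_{k+1}$.

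Next I would iterate. The base case $k=k_0$ is trivial, since the ratio is $1$. If $e_k=(\sigma_k/\sigma_{k_0})e_{k_0}$, then the recursion gives $e_{k+1}=(\sigma_{k+1}/\sigma_k)(\sigma_k/\sigma_{k_0})e_{k_0}=(\sigma_{k+1}/\sigma_{k_0})e_{k_0}$. The product of ratios telescopes, which yields $x_k-c_i=(\sigma_k/\sigma_{k_0})(x_{k_0}-c_i)$ for all $k\ge k_0$ up to $K$. Setting $k=K$ and using $\sigma_K=0$ then gives $e_K=0$, that is, $x_K=c_i$.

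The only point needing care is well-definedness. The velocity $v=e_k/\sigma_k$ is used only at steps $k=k_0,\dots,K-1$, where $\sigma_k>0$ because the schedule is strictly decreasing and ends at $\sigma_K=0$. The final state $x_K$ is produced by the update from step $K-1$, so there is never a division by $\sigma_K$. I would also assume $k_0<K$, so that $\sigma_{k_0}>0$ and the ratio $\sigma_k/\sigma_{k_0}$ is defined.

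The argument is coordinate-wise and uses nothing about the model, so the conclusion holds regardless of what the network would have predicted. This is why the frozen token's final error is determined entirely by the choice of $c_i$, as claimed in observation (i). I do not anticipate a genuine obstacle here.
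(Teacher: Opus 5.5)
Your proof is correct and is the paper's own argument: subtracting the fixed anchor turns the Euler update into the contraction $x_{k+1}-c_i=(\sigma_{k+1}/\sigma_k)(x_k-c_i)$, which telescopes and vanishes at $\sigma_K=0$. Your check that the velocity is only evaluated where $\sigma_k>0$ (steps $k_0\le k\le K-1$, with $k_0<K$) is a small addition that the paper leaves implicit.
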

\noindent\emph{Proof.} $x_{k+1}-c_i=(x_k-c_i)\,(1+(\sigma_{k+1}-\sigma_k)/\sigma_k)=(\sigma_{k+1}/\sigma_k)(x_k-c_i)$; telescope and use $\sigma_K=0$. \hfill$\square$

The statement concerns latents and a fixed anchor.
While frozen, a token's trajectory depends only on its start, its anchor and the noise schedule, not on how active tokens are computed or cached.
At an intermediate refresh step $r$ ($\sigma_r\neq0$) the token has not reached $c_i$; \method{} then sets a new anchor $\xh(\sigma_r)$ and holds it to the end, so $x_K=\xh(\sigma_r)$, which does depend on the cached trajectory before $r$.
Frozen K/V still influence active tokens, and VAE decoding mixes neighbouring latents.

\textbf{Choosing the anchor.}
Returning frozen tokens to the reference (SpotEdit) leaves an error $x_0^\ast-\mathrm{ref}$ that does not shrink with the decision step, because the dense model itself slightly re-renders unedited regions.
Driving them to $\xh(\sigma_\tau)$ (RegionE) leaves $\xh(\sigma_\tau)-x_0^\ast$, which shrinks as $\tau$ grows.
On the truly unchanged tokens of 44 development samples (Fig.~\ref{fig:anchor}), the $x_0$ anchor is better on every sample and every decision step; at $\tau=4$ it reduces latent error by 5.2\,dB and raises the decoded PSNR of the unchanged region to the dense output from 29.0 to 32.7\,dB.
Taking the estimate later (a refresh) helps further (34.7\,dB at step 16).
Mixing the two anchors per frequency band gains at most 0.7\,dB, and extrapolating the $x_0$ trajectory in $\sigma$ is 4--56\,dB worse because the trajectory first drifts and then converges.
The price is fidelity to the input in unchanged regions (29.9 vs.\ 33.3\,dB for the reference anchor), which equals that of the dense model itself (29.7\,dB).

\begin{figure}[t]
  \centering
  \includegraphics[width=\columnwidth]{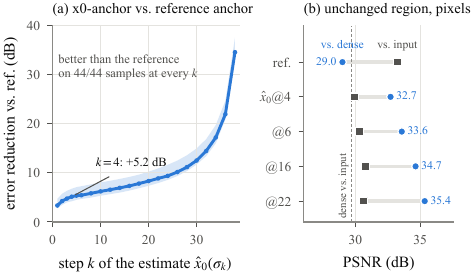}
  \caption{Frozen-token anchor (development set, unchanged tokens). (a) Latent error reduction of the $x_0$ anchor over the reference anchor vs.\ the step of the estimate (median and IQR over 44 samples). (b) Decoded PSNR of the unchanged region to the dense output and to the input.}
  \label{fig:anchor}
\end{figure}

\subsection{A per-step cost model}
\label{sec:cost}
We model the end-to-end time of one edit as
\begin{equation}
T=T_\mathrm{fix}+n_\mathrm{full}\,C+n_\mathrm{comp}\,(\bar\rho\,C+o)+n_\mathrm{cache}\,c,
\label{eq:cost}
\end{equation}
where $n_\mathrm{full}$, $n_\mathrm{comp}$ and $n_\mathrm{cache}$ count full steps, compact steps over active tokens and cached steps, $\bar\rho$ is the mean active fraction, and $T_\mathrm{fix}$ covers text encoding and VAE.
A least-squares fit on 968 development runs (22 configurations $\times$ 44 samples) gives $T_\mathrm{fix}=1.48$\,s, $C=0.575$\,s, $o=0.005$\,s and $c=0.012$\,s ($R^2=0.997$).
Without refitting, the model predicts the 1056 runs of the test split, including configurations absent from the development set (fewer steps, SpotEdit+TS, RegionE+TS), with a median absolute time error of 4.0\% (90th percentile 7.9\%).

Setting every non-full step to zero cost gives a speed ceiling that depends only on $n_\mathrm{full}$ (Table~\ref{tab:caps}), in the spirit of Amdahl's law~\cite{amdahl}.
RegionE's schedule (9 warm-up, 1 forced refresh, 3 smoothing steps) has 13 full steps and a predicted ceiling of 2.73$\times$; with TaylorSeer we measure 2.70$\times$.
SpotEdit keeps 7 full steps and also decodes for its perceptual selector; its ceiling is 4.45$\times$ and we measure 3.90$\times$.
The limit comes from the original schedules rather than from region freezing itself; under our schedule the same selectors exceed 5$\times$ (Sec.~\ref{sec:sched}).

\begin{table}[t]
  \centering
  \caption{Speed ceilings implied by Eq.~\eqref{eq:cost} (all non-full steps free) and the highest measured speed-up on the test split.}
  \label{tab:caps}
  \setlength{\tabcolsep}{4pt}
  \begin{tabular}{lccc}
    \toprule
    Schedule & Full steps & Ceiling & Measured max.\\
    \midrule
    RegionE~\cite{regione} & 13 & 2.73$\times$ & 2.70$\times$ (+TS)\\
    SpotEdit~\cite{spotedit} & 7 & 4.45$\times$ & 3.90$\times$ (+TS)\\
    \method{} & 5 & 5.62$\times$ & 5.48$\times$\\
    \method{} (no refresh) & 4 & 6.48$\times$ & 5.99$\times$\\
    \bottomrule
  \end{tabular}
\end{table}

\subsection{Composed errors and the budget split}
\label{sec:compose}
Freezing lowers the cost of compact and cached steps, so a given threshold $\delta$ yields a higher speed-up, and a given speed-up needs a smaller $\delta$.
On the development set, reaching 5$\times$ requires $\delta=0.8$ for TaylorSeer alone (24.0\,dB) but only $\delta=0.3$ when composed with freezing (31.0\,dB; Fig.~\ref{fig:mech}b).
This helps only if the two errors do not amplify each other.
For 30 development samples in which tokens were frozen, two refresh settings and three thresholds, we measured the MSE of freezing alone, caching alone (same $\delta$) and the composition.
In 90\% of the 180 cases the composed MSE is below the sum of the individual ones (95\% CI 86--94\%, bootstrapped over samples); the median ratio is 0.76 (CI 0.68--0.81).
All 18 exceptions occur at $\delta=0.3$, 15 of them with an early refresh at step 16, exceeding the bound by 1.5\,dB in the median.
We treat this as an empirical regularity over the tested range rather than a general bound. The exceptions point to an interaction between a refresh and the Taylor extrapolation across it.

\begin{figure}[t]
  \centering
  \includegraphics[width=\columnwidth]{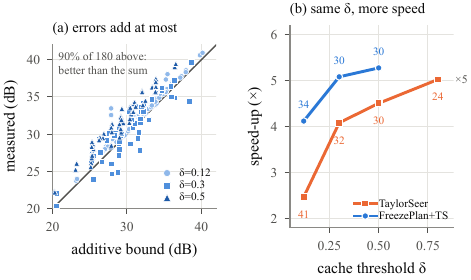}
  \caption{Composition on the development set. (a) Measured composed PSNR vs.\ the PSNR implied by summing the individual MSEs; 90\% of points lie above the diagonal. (b) Speed-up vs.\ cache threshold $\delta$; labels give PSNR. With freezing, 5$\times$ is reached at $\delta=0.3$.}
  \label{fig:mech}
\end{figure}

\subsection{\method{}}
\label{sec:freezeplan}
Fig.~\ref{fig:method} summarizes the schedule and one compact step.
\textbf{Decision.} The first $\tau=4$ steps run in full. For each token we average $\|\xh-\mathrm{ref}\|$ over them; tokens above the median plus $2\times$MAD become active and the active set is dilated by one token. If the median distance exceeds 0.5 (a global edit), nothing is frozen (\emph{fallback}).
\textbf{Freezing.} Frozen tokens follow Prop.~\ref{prop:anchor} toward $\xh(\sigma_{\tau-1})$; their K/V is appended once to the prefix cache.
\textbf{Refresh.} One full step at $r=21$ renews the anchors and the appended K/V; it is optional (Sec.~\ref{sec:ablation}).
\textbf{Caching.} Active tokens use DBCache (F1B0) with first-order TaylorSeer at $\delta=0.3$; our implementation reproduces the official cache-dit~\cite{cachedit} outputs bit-for-bit.
\textbf{Execution.} Active tokens are gathered into a compact sequence, so the transformer processes only them.
All of $\tau$, $r$, the selection rule and $\delta$ were chosen on the development set. The final choice, the composed configuration with the highest PSNR among those measured at $\ge$5$\times$, was fixed before test split B was generated.

\section{Experiments}
\label{sec:exp}

\subsection{Setup}
We use Qwen-Image-2.1~\cite{qwenimage21} (40 steps, CFG$=$1, 1024 px, bf16) on one RTX~4090 and the English part of GEdit-Bench~\cite{step1x} (11 edit types).
Within each type, samples are ordered by the SHA-1 of their key and fixed positions form disjoint splits of 44 samples: development (all design choices), test A and test B (main).
Configurations and pass criteria of each test split were written to a hashed contract before any output was generated; one earlier test run with a different selection rule was stopped at 12/43 samples because its configuration could not reach 5$\times$ (Appendix~\ref{app:protocol}).
We report PSNR and SSIM~\cite{ssim} to the 40-step output (also on truly changed tokens), the GEdit VIEScore prompts~\cite{viescore,step1x} judged by Qwen3-VL-8B~\cite{qwen3vl} as a paired difference to the dense output with bootstrap 95\% CIs ($\Delta$VLM), and the median real end-to-end speed-up measured with CUDA events.
Baselines are TaylorSeer and DBCache with official cache-dit settings ($\delta=0.3$--$1.0$), fewer steps (8/7/6), and SpotEdit and RegionE ported from their official code with TaylorSeer as the step cache; every baseline is pushed to the highest speed it reaches.
Each sample is generated once with a fixed seed (20261023), shared by all methods.
All runs use one RTX~4090 (48\,GB); the test split takes about 2.3\,GPU-hours for 24 configurations and 1.4 more for VLM judging.

\begin{table}[t]
  \centering
  \caption{Test split B (44 samples, pre-registered). Medians; $\Delta$VLM: mean paired difference to the dense output with 95\% CI. $^\dagger$Fastest setting the method reaches.}
  \label{tab:main}
  \footnotesize
  \setlength{\tabcolsep}{2.2pt}
  \resizebox{\columnwidth}{!}{%
  \begin{tabular}{lccccc}
    \toprule
    Method & Speed$\uparrow$ & PSNR$\uparrow$ & Edit$\uparrow$ & SSIM$\uparrow$ & $\Delta$VLM \\
    \midrule
    TaylorSeer $\delta{=}0.8$ & 5.02$\times$ & 23.45 & 19.71 & 0.707 & $-$1.59 {\scriptsize[$-$1.93,$-$1.25]}\\
    TaylorSeer $\delta{=}1.0$ & 5.66$\times$ & 19.39 & 17.11 & 0.522 & $-$4.11 {\scriptsize[$-$4.67,$-$3.53]}\\
    DBCache $\delta{=}0.8$ & 5.03$\times$ & 23.33 & 19.65 & 0.700 & $-$1.67 {\scriptsize[$-$2.01,$-$1.33]}\\
    6 steps & 5.17$\times$ & 22.22 & 17.51 & 0.869 & $-$0.23 {\scriptsize[$-$0.38,$-$0.10]}\\
    SpotEdit+TS$^\dagger$ & 3.90$\times$ & 26.14 & 20.00 & 0.884 & $-$0.44 {\scriptsize[$-$0.78,$-$0.17]}\\
    RegionE+TS$^\dagger$ & 2.70$\times$ & 32.93 & 27.60 & 0.960 & $+$0.12 {\scriptsize[$-$0.10,$+$0.47]}\\
    \midrule
    \method{} & 5.05$\times$ & \tsb{29.77} & \tsb{24.61} & \tsb{0.930} & \tsb{$-$0.06} {\scriptsize[$-$0.15,$+$0.01]}\\
    \method{}, no refresh & 5.53$\times$ & 29.56 & 24.44 & 0.917 & $-$0.11 {\scriptsize[$-$0.20,$-$0.02]}\\
    \quad + $\delta{=}0.5$ & \tsb{5.99$\times$} & 28.02 & 22.24 & 0.893 & $-$0.12 {\scriptsize[$-$0.21,$-$0.02]}\\
    \bottomrule
  \end{tabular}}
\end{table}

\begin{figure}[t]
  \centering
  \includegraphics[width=\columnwidth]{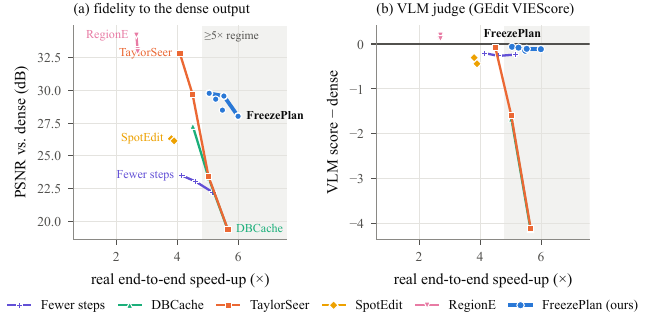}
  \caption{Speed--fidelity on test split B. (a) PSNR to the dense output vs.\ real speed-up; (b) $\Delta$VLM. Shaded: the $\ge$5$\times$ regime.}
  \label{fig:curves}
\end{figure}

\subsection{Main results}
Table~\ref{tab:main} and Fig.~\ref{fig:curves} test whether \method{} outperforms the methods able to reach 5$\times$.
At nearly equal speed (5.05$\times$ vs.\ 5.02$\times$), \method{} is 6.3\,dB above TaylorSeer and keeps the edited region 4.9\,dB higher; against the TaylorSeer curve interpolated at 5.53$\times$ the margin is 9.3\,dB.
$\Delta$VLM shows no significant drop for \method{} ($-$0.06, CI $[-0.15,+0.01]$); this is not a proof of equivalence.
DBCache and fewer steps give the same picture in PSNR.
Neither region method reaches 5$\times$ with its original schedule (Table~\ref{tab:caps}): RegionE+TS stops at 2.70$\times$ with higher fidelity than \method{} (32.9\,dB), and SpotEdit+TS stops at 3.90$\times$ with lower fidelity (26.1\,dB).
On test split A, an earlier version of \method{} (reference anchor, no refresh, DBCache) was 7.8\,dB above TaylorSeer at 5.56$\times$ with $\Delta$VLM $-0.21$; this supports the same direction on independent samples but does not validate the final configuration.

\subsection{Schedule-matched selectors and ablations}
\label{sec:sched}
\label{sec:ablation}
To separate the schedule from the region selector, we kept the schedule and cache of \method{} fixed and swapped only the selector (Table~\ref{tab:abl}, top; a post-hoc comparison, not pre-registered).
All three selectors exceed 5$\times$. Their median PSNR is 0.5--0.6\,dB below that of \method{}, and the paired per-sample difference is 0.26\,dB (95\% CI 0.08--0.46) against RegionE and 0.29\,dB (0.08--0.73) against SpotEdit. RegionE's partition freezes more tokens and runs faster, so it lies close to the same speed--fidelity curve.
$\Delta$VLM differences between selectors are not significant.
The gain of \method{} therefore comes mainly from the schedule and budget split, not from the selector.
Removing one component at a time (Table~\ref{tab:abl}, bottom) shows that the $x_0$ anchor matters most; the refresh is a speed--fidelity knob rather than a requirement: dropping it trades 0.21\,dB for 9.5\% more speed.

\begin{table}[t]
  \centering
  \caption{Top: region selectors under the schedule and cache of \method{} (post hoc). Bottom: one component removed at a time. Test split B; $\rho$: mean fraction of tokens computed per step.}
  \label{tab:abl}
  \setlength{\tabcolsep}{3.5pt}
  \begin{tabular}{lcccc}
    \toprule
    Variant & Speed & PSNR & $\rho$ & $\Delta$VLM\\
    \midrule
    \method{} selector & 5.03$\times$ & 29.77 & 0.41 & $-$0.06\\
    RegionE selector~\cite{regione} & 5.23$\times$ & 29.29 & 0.31 & $+$0.05\\
    SpotEdit selector~\cite{spotedit} & 5.01$\times$ & 29.14 & 0.43 & $-$0.16\\
    \midrule
    reference anchor (SpotEdit-style) & 5.06$\times$ & 27.84 & 0.41 & $-$0.40\\
    no frozen K/V & 5.15$\times$ & 29.02 & 0.41 & $-$0.21\\
    no global fallback & 5.11$\times$ & 28.84 & 0.37 & $-$0.15\\
    random active set & 5.07$\times$ & 28.65 & 0.41 & $-$0.18\\
    \bottomrule
  \end{tabular}
\end{table}

\subsection{Relation to fewer steps}
\label{sec:fewsteps}
Six steps reach 5.17$\times$ at 22.2\,dB but lose only 0.23 VLM points: a shorter trajectory yields a different yet plausible image.
We target reproducing the output the original model produces for a given input and seed, on which \method{} is 7.4\,dB better; it is also slightly better in $\Delta$VLM ($+$0.17, CI $[+0.05,+0.30]$).
The two approaches are orthogonal and can be combined.

\subsection{Qualitative results}
\label{sec:qual}
Fig.~\ref{fig:qual} shows an edit on an original anime illustration that is not part of GEdit-Bench. We generated five such illustrations with Qwen-Image-2.1 and edited them with the configurations of split B; the figure shows the one with the smallest gain of \method{} over the speed-matched baselines. Flat colour regions make step-caching artefacts easy to see: at 5$\times$, TaylorSeer and DBCache turn the cake into noise, while \method{} matches the dense output. Appendix~\ref{app:qual} shows all five illustrations and split-B samples chosen at fixed percentiles of the PSNR gain.

\begin{figure*}[t]
  \centering
  \includegraphics[width=\textwidth]{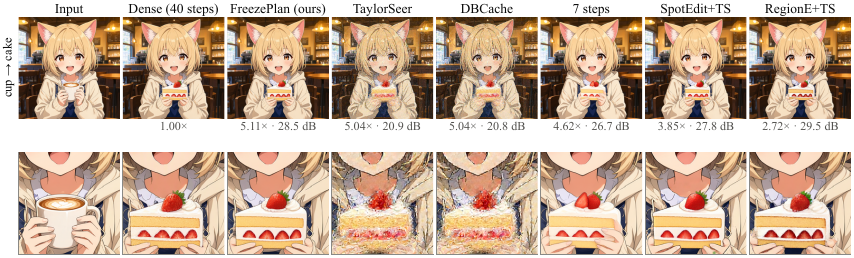}
  \caption{``Replace the cup in her hands with a slice of strawberry cake'' on an original anime illustration (outside the benchmark), at about 5$\times$ real speed-up. Labels: speed-up $\cdot$ PSNR to the 40-step output; bottom row: crop of the edited region. Step caching alone (TaylorSeer, DBCache) breaks down; fewer steps give a plausible but different cake; the re-implemented region methods (SpotEdit, RegionE, both with TaylorSeer) cannot reach this speed with their schedules. \method{} (5.11$\times$) matches the dense output.}
  \label{fig:qual}
\end{figure*}

\section{Limitations}
\label{sec:limits}
(1)~The fallback fired on 11/44 test samples (25\%), where \method{} is slightly below TaylorSeer at the same threshold (details and a fix in Appendix~\ref{app:fallback}).
(2)~Newly added small objects occasionally show faint ghosting near the frozen boundary; we have not quantified it.
(3)~All results are on Qwen-Image-2.1 and rely on its prefix KV cache; the cost coefficients are specific to one GPU.
(4)~The $x_0$ anchor lets unedited regions follow the dense model's slight re-rendering instead of copying the input.
(5)~Each test split has 44 samples generated with one seed, the VLM judge is a local 8B model rather than a commercial one, and the selector comparison of Sec.~\ref{sec:sched} was added after the test results were known.

\section{Conclusion}
\label{sec:conclusion}
On Qwen-Image-2.1, the region methods we re-implemented are capped by the full steps in their schedules, and step caching alone crosses its error knee at high speed-ups.
Spending few full steps and handing the rest of the budget to a conservative cache, \method{} reaches 5.05$\times$ at 29.8\,dB on a pre-registered split with no significant VLM drop, and the schedule-matched comparison locates the gain in the schedule and budget split.

\bibliographystyle{IEEEtran}
\bibliography{refs}

\appendices
\section{Conventions}
\label{app:conventions}
Data labels: \textbf{[dev]} development split, \textbf{[A]} test split A, \textbf{[B]} test split B (main, pre-registered), \textbf{[A6]} post-hoc selector comparison on split B. All numbers are medians over samples unless stated; PSNR and SSIM are measured against the 40-step output of the same run; identical outputs (infinite PSNR) are counted as 100\,dB.

\section{Scope of Proposition 1}
\label{app:scope}
For a token frozen at step $k_0$ with a fixed anchor $c$, $x_k-c=(\sigma_k/\sigma_{k_0})(x_{k_0}-c)$ for all $k\ge k_0$.
The frozen trajectory is therefore fixed by its start, its anchor and the noise schedule; computation and caching of active tokens do not enter it.
The token reaches $c$ only if the anchor is held until $\sigma_K=0$.
At a refresh step $r$ the token is still at $c+(\sigma_r/\sigma_{k_0})(x_{k_0}-c)$; the full step there computes a new anchor $\xh(\sigma_r)$, which \method{} holds to the end.
That anchor is computed on a trajectory shaped by the earlier cached steps, so the final frozen latent depends on the cache before $r$.
Frozen K/V also enter the attention of active tokens, and the VAE decoder mixes neighbouring latents, so pixel errors are not strictly separable.
Sec.~\ref{sec:anchor} claims only that between refreshes the frozen latents do not depend on the concurrent cache.

\section{Exceptions to error additivity [dev]}
\label{app:additivity}
We measured 180 cases: 30 samples in which tokens were frozen, refresh at step 16 or 21, and $\delta\in\{0.12,0.3,0.5\}$.
For each case we compare the composed MSE with the sum of the MSE of freezing alone and of TaylorSeer alone at the same $\delta$.

\begin{table}[h]
  \centering
  \caption{Composed MSE relative to the additive bound (180 cases). CIs bootstrap over samples.}
  \label{tab:exc}
  \setlength{\tabcolsep}{3pt}
  \begin{tabular}{lc}
    \toprule
    Share of cases below the bound & 90\% (95\% CI 86--94\%)\\
    Median ratio composed / bound & 0.76 (95\% CI 0.68--0.81)\\
    Exceptions & 18 (all at $\delta=0.3$)\\
    \quad with refresh at step 16 / 21 & 15 / 3\\
    \quad distinct samples & 15 of 30\\
    Excess over the bound (median / max.) & 1.5 / 4.0\,dB\\
    Cache share of the bound (exceptions / others) & 0.56 / 0.40\\
    \bottomrule
  \end{tabular}
\end{table}

\noindent The exceptions concentrate where the cache error is relatively large and the refresh is early.
A plausible cause is that the Taylor derivative straddles the refresh, mixing features computed with stale frozen K/V and with refreshed ones.
We have not verified this mechanism; a direct test would reset the TaylorSeer history at the refresh.
The later refresh ($r=21$) used by \method{} is less affected.

\section{Cost model}
\label{app:cost}
Eq.~\eqref{eq:cost} was fitted by least squares on 968 development runs ($T_\mathrm{fix}=1.48$\,s, $C=0.575$\,s, $o=0.005$\,s, $c=0.012$\,s; $R^2=0.997$; median absolute time error 2.2\%, 90th percentile 6.4\%).
Table~\ref{tab:oos} applies the same coefficients to test split B without refitting.

\begin{table}[h]
  \centering
  \caption{Out-of-sample prediction on split B (1056 runs). Configurations marked * do not appear in the development set.}
  \label{tab:oos}
  \setlength{\tabcolsep}{3pt}
  \resizebox{\columnwidth}{!}{%
  \begin{tabular}{lccc}
    \toprule
    Configuration & Measured & Predicted & Time error\\
    \midrule
    TaylorSeer $\delta=0.3$ / 0.8 / 1.0 & 4.09 / 5.02 / 5.66$\times$ & 4.15 / 5.14 / 5.84$\times$ & 1.3--2.3\%\\
    8 / 7 / 6 steps* & 4.15 / 4.61 / 5.17$\times$ & 4.03 / 4.45 / 4.97$\times$ & 3.4--4.5\%\\
    RegionE+TS* & 2.66--2.70$\times$ & 2.61--2.64$\times$ & 2.5\%\\
    SpotEdit+TS* & 3.81--3.90$\times$ & 4.09--4.16$\times$ & 6.4\%\\
    \method{}+TS (all variants) & 5.05--5.99$\times$ & 4.78--5.59$\times$ & 4.9--6.0\%\\
    \midrule
    All 1056 runs & & & median 4.0\%, P90 7.9\%\\
    \bottomrule
  \end{tabular}}
\end{table}

\noindent Two biases are systematic and explicable.
The model under-predicts the speed of composed configurations by about 6\%, because the cost of a compact step does not scale exactly linearly with the active fraction.
It over-predicts SpotEdit by about 7\%, because the decode needed by its perceptual selector is not modelled.
All runs used one RTX~4090; the coefficients are hardware-specific, but the structural conclusion (full steps set the ceiling) is not.

\section{Refresh as a speed--fidelity knob}
\label{app:refresh}
On split B, \method{} with a refresh reaches 5.05$\times$ at 29.77\,dB and without it 5.53$\times$ at 29.56\,dB; both lie on the speed--fidelity front.
On the development set the refresh adds 2.1\,dB when no step cache is used (35.1 vs.\ 33.0\,dB), so its benefit shrinks once the cache error dominates.

\section{Alternatives that did not work [dev]}
\label{app:alternatives}
\begin{itemize}
  \item \emph{Per-band anchor mixing:} the optimal weights on the $x_0$ anchor are 1.02--1.16 in every frequency band, and the gain over the plain $x_0$ anchor is at most 0.7\,dB.
  \item \emph{Extrapolating $\xh(\sigma)$ to $\sigma=0$} with polynomials over the warm-up steps is 4--56\,dB worse than the reference anchor; the $x_0$ trajectory drifts before it converges.
  \item \emph{Partial refresh:} the error of the $x_0$ anchor is spread out (the worst 10\% of tokens carry 17\% of it), and decision-time predictors correlate weakly with it (Spearman $\le 0.38$). Refreshing 30\% of tokens recovers 1.0\,dB versus 3.8\,dB for a full refresh.
\end{itemize}

\section{Why the re-implemented region methods stay below \texorpdfstring{5$\times$}{5x}}
\label{app:regionmethods}
\textbf{RegionE.} Its schedule keeps 13 full steps (9 warm-up, 1 forced update, 3 smoothing steps; rescaled from its 28-step setting). The cost model caps it at 2.73$\times$; we measure 2.70$\times$ with TaylorSeer. With its own AVDCache, raising the cache threshold from 0.03 to 0.1 or 0.2 produced identical outputs: the official rule recomputes every step at which the decay ratio $\gamma(1-\Delta\sigma)$ is at least 1, independently of the threshold.

\textbf{SpotEdit.} Its schedule keeps 7 full steps (4 initial steps and resets at 10, 18, 25) and decodes at each decision for its LPIPS selector; the ceiling is 4.45$\times$ and we measure 3.90$\times$.

\textbf{Schedule-matched comparison [A6].} Under the schedule and cache of \method{}, both selectors exceed 5$\times$ (Table~\ref{tab:abl}). Their $\Delta$VLM to the dense output is $+0.05$ $[-0.17,+0.41]$ (RegionE) and $-0.16$ $[-0.32,-0.02]$ (SpotEdit), against $-0.06$ $[-0.15,+0.01]$ for \method{}; the paired differences to \method{} are $-0.11$ $[-0.46,+0.09]$ and $+0.09$ $[-0.01,+0.22]$, neither significant. A reference anchor under the same schedule costs 1.9\,dB (Table~\ref{tab:abl}, bottom).

\section{Protocol, amendments and the stopped run}
\label{app:protocol}
\textbf{Contracts.} For each test split, sample list, configurations and pass criteria were written to a JSON contract whose SHA-256 was recorded before generation.

\textbf{Amendments,} all recorded before the affected analysis:
A1, infinite PSNR is counted as 100\,dB in medians;
A2, comparisons against SpotEdit points use the nearest faster measured point instead of interpolation;
A3, competitor coverage was extended on split A (TaylorSeer above 4.4$\times$, fewer steps);
A5, on split B a local-edit-only analysis was added after 6 of 44 samples, as secondary to the registered all-sample gates;
C1, the development analysis script dropped infinite PSNR values contrary to A1 and was corrected before the automatic configuration choice, which did not change.

\textbf{Stopped run (GEdit positions 12--15).} Its configuration was selected by a rule that ranked region-only quality and chose a decision step $\tau=6$. With seven non-cacheable steps, its composition with TaylorSeer peaked near 4.4$\times$ on local edits, so the $\ge$5$\times$ regime could not be tested, and the run was stopped at 12 of 43 samples. Partial results on its local edits: region-only 2.14$\times$ at 38.6\,dB; with TaylorSeer $\delta=0.8$, 4.37$\times$ at 26.5\,dB versus 5.04$\times$ at 23.9\,dB for TaylorSeer alone; RegionE 2.42$\times$ at 38.8\,dB and, with AVDCache threshold 0.1 or 0.2, an identical 2.52$\times$ at 35.3\,dB.

\textbf{Reproducibility.} The in-path TaylorSeer equals the official cache-dit output bit-for-bit at four thresholds on all 44 samples of split A. On Qwen-Image-2.1, cache-dit's calibrator counter runs two steps behind the denoising step, because the step-0 reference-extraction pass enters the cached blocks; we reproduce this offset. Re-running the dense model on split A reproduced 39 of 44 outputs exactly; the other five differ at 68--70\,dB PSNR (at most 2 grey levels), which we attribute to non-deterministic GPU kernels.

\textbf{RegionE port.} AVDCache factors $\gamma$ are published for 28 steps only; we refit them on the development trajectories as mean norm ratios divided by $(1-\Delta\sigma)$. Velocity reuse follows the official code (last computed velocity times the current ratio), which differs from the cumulative product written in the paper.

\section{Global edits and the fallback [B]}
\label{app:fallback}
The fallback fired on 11/44 split-B samples (25\%), mostly style and background changes. It depends on the measured $x_0$ distance, not on the edit category: motion and portrait edits, although ``global'' in GEdit-Bench, never triggered it and froze about 63\% of tokens.

\begin{table}[h]
  \centering
  \caption{Fallback rate and active fraction on split B.}
  \label{tab:fb}
  \setlength{\tabcolsep}{3pt}
  \resizebox{\columnwidth}{!}{%
  \begin{tabular}{lccc}
    \toprule
    Subset & Samples & Fallback & $\rho$ (no fallback)\\
    \midrule
    Global categories & 20 & 9 (style 4/4, background 4/4, tone 1/4) & 0.37\\
    Local categories & 24 & 2 (colour 1, text 1) & 0.40\\
    All & 44 & 11 (25\%) & 0.38\\
    \bottomrule
  \end{tabular}}
\end{table}

\noindent On the 11 fallback samples, \method{} ($\delta=0.3$) reaches 4.07$\times$ at 25.1\,dB, against 4.09$\times$ at 26.4\,dB for TaylorSeer at the same threshold and 5.02$\times$ at 19.8\,dB at a similar speed. With nothing frozen, \method{} differs from TaylorSeer only by the refresh step, so the 1.3\,dB gap most likely comes from the refresh interrupting the Taylor extrapolation; the exceptions to error additivity (Appendix~\ref{app:additivity}; 15 of 18 with a refresh at step 16) point the same way. Skipping the refresh when the fallback fires is a straightforward fix that we have not evaluated, since it would change the pre-registered configuration. On the 11 global-category samples without fallback, \method{} reaches 5.13$\times$ at 31.1\,dB against 5.01$\times$ at 24.1\,dB for TaylorSeer $\delta=0.8$. By category group: local, \method{} 5.07$\times$/31.1\,dB ($\Delta$VLM $-0.02$) vs.\ TaylorSeer 5.02$\times$/24.4\,dB; global, 4.96$\times$/28.7\,dB vs.\ 5.02$\times$/22.5\,dB.

\section{Fewer steps on both test splits}
\label{app:fewsteps}
On split A (early \method{} version), 10/8/7/6 steps give 3.47/4.16/4.61/5.18$\times$ at 26.2/25.3/24.6/23.7\,dB with $\Delta$VLM $+0.02$/$-0.02$/$-0.10$/$-0.17$. On split B, 8/7/6 steps give 4.15/4.61/5.17$\times$ at 23.5/23.0/22.2\,dB with $\Delta$VLM $-0.21$/$-0.27$/$-0.23$. At a similar speed, six steps lose 4.3\,dB to the early \method{} version on split A and 7.4\,dB to the final version on split B in fidelity to the original output, while staying close to the original in the VLM judge.

\section{Qualitative results}
\label{app:qual}
Figure~\ref{fig:app-qual} shows split-B samples chosen by a fixed rule: among local edits with freezing, the samples at the 25th, 50th and 75th percentile of the PSNR gain of \method{} over the speed-matched baselines, plus the median global-edit sample. Figure~\ref{fig:app-anime} shows all five original anime illustrations, generated with Qwen-Image-2.1 from text descriptions of original characters and edited with the configurations of split B; they are not part of any benchmark. Over the speed-matched baselines, \method{} gains 7.6--10.3\,dB on them.

\textbf{Ghosting.} In the split-B ``add a tennis ball'' sample (second row of Fig.~\ref{fig:app-qual}), \method{} shows a faint second ball near the frozen boundary. We have not quantified how often this occurs.

\begin{figure*}[t]
  \centering
  \includegraphics[width=0.94\textwidth]{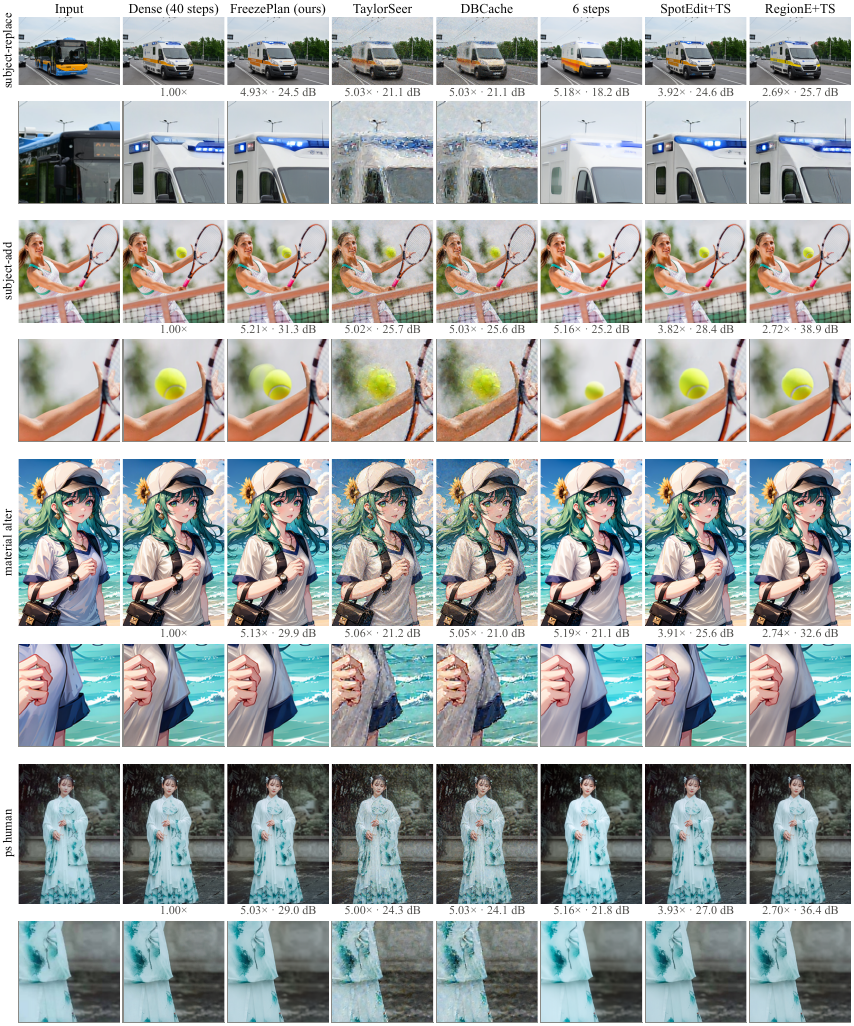}
  \caption{Split B, about 5$\times$; rows chosen by the rule above (crops of the edited region below each row). Labels: speed-up $\cdot$ PSNR to the 40-step output.}
  \label{fig:app-qual}
\end{figure*}

\begin{figure*}[t]
  \centering
  \includegraphics[width=0.86\textwidth]{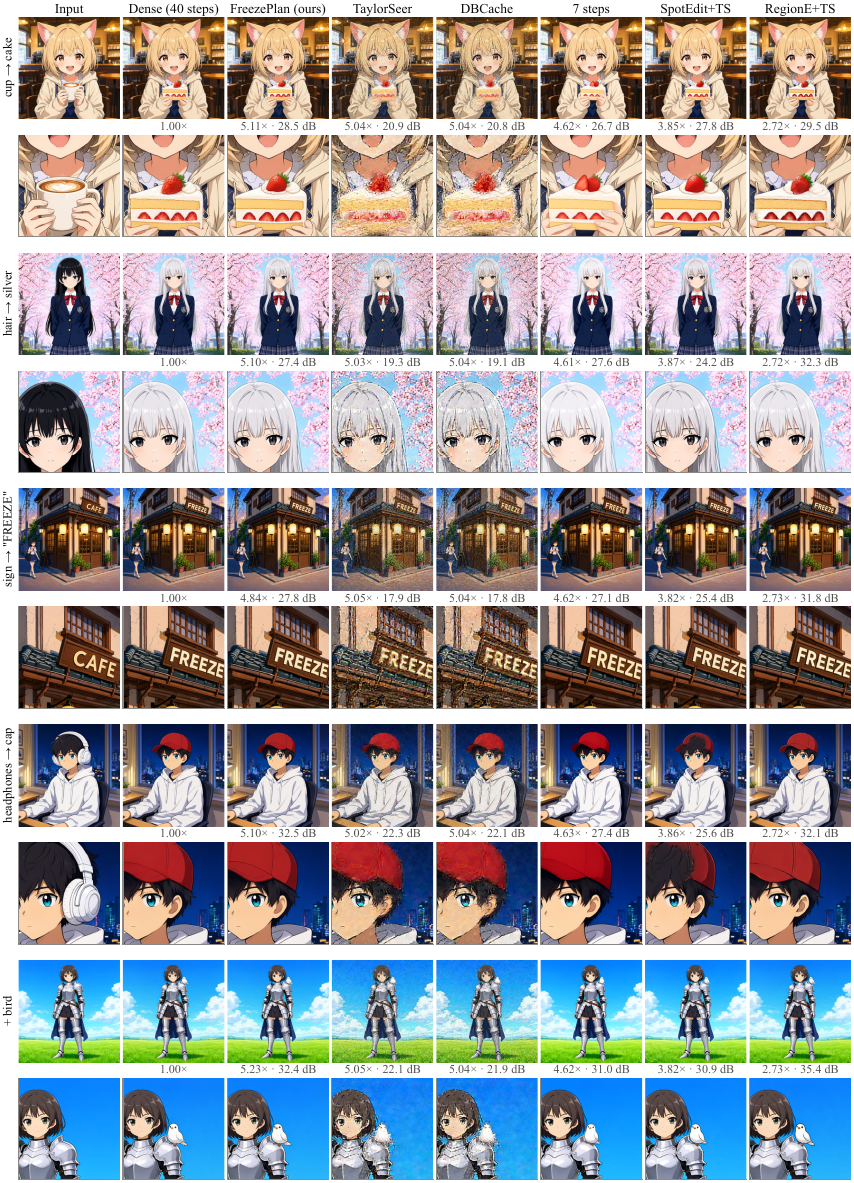}
  \caption{All five original anime illustrations, about 5$\times$, same configurations as split B.}
  \label{fig:app-anime}
\end{figure*}

\section{Relation to HyCa, BudCache and ProCache}
\label{app:related}
HyCa (ICLR 2026) assigns different extrapolation solvers to different feature dimensions and reports 6.24$\times$ on Qwen-Image and Qwen-Image-Edit. It improves the step-cache operator and could replace TaylorSeer inside \method{}. We did not port it to the prefix KV cache of Qwen-Image-2.1; its reported numbers use other models, quality criteria and samples and are not directly comparable to ours.
BudCache (ICML 2026) searches step-level caching policies under a fixed number of function evaluations (up to about 3.7$\times$ on FLUX.1 and Wan2.1), and ProCache (AAAI 2026) searches constrained caching schedules with selective token computation (up to about 2.9$\times$). Both allocate the budget along the step axis. We study how spatial freezing shifts the operating point of that axis, so these schedulers could serve as the temporal component of \method{}.
\end{document}